\documentclass{article}

\usepackage{mdframed}

\usepackage[preprint]{neurips_2023_otml}

\usepackage[utf8]{inputenc} \usepackage[T1]{fontenc}    \usepackage{hyperref}       \usepackage{url}            \usepackage{amsfonts}       \usepackage{nicefrac}       \usepackage{xcolor}         \usepackage{microtype}
\usepackage{graphicx}
\usepackage{subfigure}
\usepackage{booktabs} \usepackage{caption}

\usepackage{algorithm}
\usepackage{algorithmic}

\usepackage{amsmath}
\usepackage{amssymb}
\usepackage{mathtools}
\usepackage{amsthm}
\usepackage{thm-restate}

\usepackage{wrapfig}
\usepackage{nicematrix}

\usepackage[capitalize,noabbrev]{cleveref}

\newcommand{\R}{\mathbb{R}}

\newcommand{\C}{\mathbf{C}}

\newcommand{\K}{\mathbf{K}}

\newcommand{\Pb}{\mathbf{P}}
\newcommand{\Tb}{\mathbf{T}}
\newcommand{\Qb}{\mathbf{Q}}

\newcommand{\ind}{1\!\!1}

\newcommand{\eg}{\textit{e.g.}\ }
\newcommand{\ie}{\textit{i.e.}\ }

\newcommand{\KL}{\operatorname{KL}}
\newcommand{\diag}{\operatorname{diag}}

\newcommand{\integ}[1]{{[\![#1]\!]}}

\usepackage{bm}
\usepackage{bbm}
\usepackage{scalerel}
\usepackage{wrapfig}

\usepackage{wrapfig}

\DeclareMathOperator*{\argmax}{arg\,max}
\DeclareMathOperator*{\argmin}{arg\,min}

\definecolor{cadmiumred}{rgb}{0.89, 0.0, 0.13}

\definecolor{cadmiumgreen}{rgb}{0.0, 0.42, 0.24}

\definecolor{OliveGreen}{rgb}{0.33, 0.62, 0.18}

\bibliographystyle{plain}

\theoremstyle{plain}
\newtheorem{theorem}{Theorem}

\newtheorem{assumption}[theorem]{Assumption}
\newtheorem{proposition}[theorem]{Proposition}

\theoremstyle{definition}

\newmdtheoremenv{prop}{Proposition}

\usepackage{graphicx, color}
\graphicspath{{fig/}}

\title{Optimal Transport with Adaptive Regularisation}

\author{  Hugues Van Assel$^{1 *}$, Titouan Vayer$^{2}$, Rémi Flamary$^{3}$, Nicolas Courty$^{4}$ \\
  $^{1}$UMPA ENS Lyon , $^{2}$LIP ENS Lyon, $^{3}$CMAP École polytechnique, \\
  $^{4}$IRISA Université Bretagne Sud  \\
  \\
    }

\setlength\intextsep{0pt}

\begin{document}

\maketitle

\def\thefootnote{*}\footnotetext{\texttt{hugues.van\textunderscore assel@ens-lyon.fr}}\def\thefootnote{\arabic{footnote}}

\begin{abstract}
  Regularising the primal formulation of optimal transport (OT) with a strictly convex term leads to enhanced numerical complexity and a denser transport plan.
  Many formulations impose a global constraint on the transport plan, for instance by relying on entropic regularisation. 
  As it is more expensive to diffuse mass for outlier points compared to central ones, this typically results in a significant imbalance in the way mass is spread across the points.
  This can be detrimental for some applications where a minimum of smoothing is required per point.  
  To remedy this, we introduce OT with Adaptive RegularIsation (OTARI), a new formulation of OT that imposes constraints on the mass going in or/and out of each point.
  We then showcase the benefits of this approach for domain adaptation.
\end{abstract}

\section{Introduction}

Optimal transport (OT) is a well-established framework to compare probability distributions with numerous applications in machine learning \cite{arjovsky2017wasserstein, ozair2019wasserstein, peyre2019computational}.
Discrete OT seeks a transportation plan that minimizes the total transportation cost between samples from the source and target distributions.
In the absence of regularisation, this optimal OT plan is inherently sparse. 
Regularising OT with a strictly convex term is a widely adopted practical approach, leading to reduced numerical complexity and more diffuse OT plans \cite{peyre2019computational}.
As an illustration, the prominent entropic regularisation \cite{cuturi2013sinkhorn} leads to a dense plan.
In some applications, the smoothing effect induced by the regularisation has a primary importance on its own. A key example is the construction of doubly stochastic affinity matrices for clustering and dimensionality reduction \cite{landa2021doubly,Zass}, where smoothing enables connecting to neighbor points.
Another is domain adaptation \cite{courty2017joint} where smoothed OT often results in enhanced performance when compared to non-regularised ones (see for instance \cref{tab:da_exps}). Many OT regularisation schemes on the primal formulation impose a constraint on the overall transport plan.
Consequently, central data points tend to exhibit a denser (or more diffuse) transport plan compared to extreme (or outlier) data points, for which diffusion is more costly. As a result, the latter points receive limited benefits from the smoothing effect introduced by the regularizer as shown in the left side of \cref{fig:entropic_ot_plans}. This partly explains OT's significant sensitivity to outliers in many applications \cite{mukherjee2021outlier, pmlr-v202-chuang23a}. 
To remedy this, one needs to constrain the transport plan in a pointwise manner.
Note that this has recently been explored for constructing affinity matrices \cite{van2023snekhorn} (\ie symmetric OT setting) leading to enhanced noise robustness and clustering abilities.

\textbf{Contributions.} In this work, we develop a new formulation of OT, called OT with Adaptive RegularIsation (OTARI), allowing to control, for any strictly convex function $\psi$, the value of $\psi$ on each row and/or column of the OT plan. 
We then show the advantages of OTARI over usual regularised OT on domain adaptation tasks, focusing particularly on the negative entropy and the $\ell_2$ norm respectively associated with entropic \cite{cuturi2013sinkhorn} and quadratic \cite{blondel2018smooth} optimal transport.

\section{Regularised Optimal Transport}

We first introduce the discrete OT problem before presenting regularised formulations and associated algorithms.
Let $X_S = \{ \bm{x}_i^S \in \R^d \}_{i=1}^{N_S}$ and $X_T = \{ \bm{x}_i^T \in \R^d \}_{i=1}^{N_T}$ denote the sets of respectively source and target point locations. The discrete \emph{Monge-Kantorovitch} problem \cite{kantorovich1942transfer} focuses on the optimal allocation strategy to transport the empirical measure $\mu_S = \frac{1}{N_S} \sum_{i=1}^{N_S} a_i  \delta_{\bm{x}_i^S}$ onto $\mu_T = \frac{1}{N_T} \sum_{i=1}^{N_T} b_i  \delta_{\bm{x}_i^T}$ where $\bm{a} \in \Delta^{N_S}$ and $\bm{b} \in \Delta^{N_T}$. It consists in computing a \emph{coupling} $\Pb \in \R_{+}^{N_S \times N_T}$ \ie a joint probability measure over $X_S \times X_T$ solving the linear program
\begin{equation}
    \tag{OT}
    \label{eq:OT}
    \min_{\Pb \in \Pi(\bm{a}, \bm{b})} \: \langle \Pb, \C \rangle
\end{equation}
where $\Pi(\bm{a}, \bm{b})$ is the transport polytope with marginals $(\bm{a}, \bm{b})$ and the \emph{cost matrix} $\mathbf{C} \in \R_+^{N_S \times N_T}$ encodes the pairwise distances between the source and target samples. One can typically consider the squared Euclidean distance $C_{ij} = \|\bm{x}^S_{i}-\bm{x}^T_{j}\|_2^2$ or any Riemannian distance over a manifold \cite{villani2009optimal}.

To enable faster algorithmic resolution as well as smoother solutions, one can rely on a strictly convex regulariser $\psi : \R^{N_s} \to \R$. It amounts to solving $\min_{\Pb \in \Pi(\bm{a}, \bm{b})} \: \langle \Pb, \C \rangle + \varepsilon^\star \sum_i \psi(\Pb_{i:})$ where $\varepsilon^\star > 0$. 
Interestingly, regularised OT can also be framed using a convex constraint as follows
\begin{align}\label{eq:cot}
    \min_{\Pb \in \Pi(\bm{a}, \bm{b})} \: \langle \Pb, \C \rangle \quad \text{s.t.} \quad  \Pb \in \overline{\mathcal{B}}(\eta)
    \tag{ROT}
\end{align}
where $\overline{\mathcal{B}}(\eta) \coloneqq \{ \Pb \: \text{s.t.} \: \sum_i \psi(\Pb_{i:}) \leq \eta \}$. Note that the previously introduced $\varepsilon^\star$ is the optimal dual variable associated with the constraint $\overline{\mathcal{B}}(\eta)$ in the above equivalent formulation. 
Throughout, we make the following assumption on $\psi$.
\begin{assumption}\label{assumption_psi}
    Let $\psi: \mathrm{dom}(\psi) \to \R \cup \{\infty\}$ be strictly convex and differentiable on the interior of its domain $\mathrm{dom}(\psi) \subset \R^{N_S}_+$.
\end{assumption}

In what follows, we denote by $\psi(\Pb) = (\psi(\Pb_{1:}), ..., \psi(\Pb_{N_S:}))^\top$.
We introduce $\psi^*:= \mathbf{p} \to \sup_{\mathbf{q} \in \mathrm{dom}(\psi)} \langle \mathbf{p}, \mathbf{q} \rangle - \psi(\mathbf{q})$ the convex conjugate of $\psi$ \cite{rockafellar1997convex}.
Note that when $\psi$ is strictly convex, this supremum is uniquely achieved and from Danskin's theorem \cite{danskin1966theory}: $\nabla \psi^*(\bm{p}) = \argmax_{\mathbf{q} \in \mathrm{dom}(\psi)} \langle \mathbf{p}, \mathbf{q} \rangle - \psi(\mathbf{q})$. 
We show in \cref{sec:proof_global} that when $\varepsilon^\star > 0$, \textit{i.e.}\ when the constraint $\Pb \in \overline{\mathcal{B}}(\eta)$ is active, \eqref{eq:cot} is solved for $\Pb^\star = \nabla \psi^*((\C - \bm{\lambda}^\star \oplus \bm{\mu}^\star) / \varepsilon^\star)$ \footnote{We use the notation $\nabla \psi^*(\Pb) \coloneqq (\nabla \psi^*(\Pb_{1:}), ..., \nabla \psi^*(\Pb_{N_S:}))^\top$.} where $(\bm{\lambda}^\star, \bm{\mu}^\star, \varepsilon^\star)$ solve the following dual problem 
\begin{align}
    \max_{\bm{\lambda}, \bm{\mu}, \varepsilon>0}  \: \langle \bm{\lambda}, \bm{a} \rangle + \langle \bm{\mu}, \bm{b} \rangle + \varepsilon \Big(\sum_i \psi^*((\C_{i:} - \lambda_i \bm{1} - \bm{\mu}) / \varepsilon) - \eta \Big) \:.
    \tag{Dual-ROT}
\end{align}
The above objective is concave thus the problem can be solved exactly using \textit{e.g.}\ BFGS \cite{liu1989limited} or ADAM \cite{kingma2014adam}.
As a complementary view, one can also frame \eqref{eq:cot} as a $\psi$-Bregman projection over a convex set. The $\psi$-Bregman divergence is defined as $D_\psi(\Pb, \Qb) := \psi(\Pb) - \psi(\Qb) - \langle \Pb - \Qb, \nabla \psi(\Qb) \rangle$. The solution of \eqref{eq:cot} can then be expressed as
$\Pb^\star = \operatorname{Proj}^{D_\psi}_{\Pi(\bm{a}, \bm{b}) \cap \overline{\mathcal{B}}(\eta)}(\K_\sigma)$
where $\K_\sigma \coloneqq \nabla \psi^*(- \C / \sigma)$ for any $\sigma < \varepsilon^\star$ (see \cref{sec:proof_global} for details). The key benefit of the above result is that it enables solving \eqref{eq:cot} with alternating Bregman projection schemes \cite{benamou2015iterative}.

In this work, we focus specifically on 
certain Bregman divergences: the Kullback Leibler ($\KL$) divergence and the squared Euclidean distance.
The first reads $D_{\KL}(\Pb | \Qb) = \langle \Pb, \log \left(\Pb \oslash \Qb \right) - \bm{1} \bm{1}^\top \rangle$ with associated negative entropy $\psi_{\KL}(\mathbf{p}) = \langle \mathbf{p}, \log \mathbf{p} - 1 \rangle$. In this case, \eqref{eq:cot} boils down to entropic OT and solved for $\operatorname{Proj}^{\KL}_{\Pi(\bm{a}, \bm{b})}(\K_{\varepsilon^\star})$ where $\K_{\varepsilon^\star} = \nabla \psi_{\KL}^{*}(-\C / \varepsilon^\star) = \exp(-\C / \varepsilon^\star)$ is a Gibbs kernel. This projection is well-known as the \emph{static Schrödinger bridge} \cite{leonard2013survey} referring to statistical physics where it first appeared \cite{schrodinger1931umkehrung}, and can be computed efficiently using the Sinkhorn algorithm \cite{cuturi2013sinkhorn}. For the squared Euclidean distance, we define $\psi_{2}(\mathbf{p}) = \frac{1}{2} \| \mathbf{p} \|^2_2$. The associated problem \eqref{prop:cot} is usually referred to as quadratic OT \cite{lorenz2021quadratically} and can yield sparse OT plans unlike entropic.

\section{Optimal Transport with Adaptive Regularisation}\label{sec:pwot}

In this section, we present a new formulation of OT that imposes constraints on each row of the OT plan.
We begin by introducing the set of matrices with \emph{point-wise} constraints. To set the upper bound, we rely on the perplexity parameter $\xi$ \cite{van2023snekhorn} that can be interpreted as the number of effective neighbors for each point. Concretely, we define $\mathbf{e}_{\xi} = \frac{1}{\xi}(\ind_{i \leq \xi})_{i}$ and
\begin{align}
  \mathcal{B}_\psi(\xi) &\coloneqq \{\Pb \bm{\geq} \bm{0} \: \text{s.t.}\ \: \forall i, \: \psi(\Pb_{i:}) \leq \psi(\mathbf{e}_{\xi}) \} \:.
\end{align}
Note that $\psi_{\KL}(\mathbf{p}_{\xi}) = - (\log \xi + 1)$ and $\psi_2(\mathbf{p}_{\xi}) = 1/\xi$.
We now define Optimal Transport with Adaptive RegularIsation (OTARI) as the generalization of \eqref{eq:cot} to the case where the strictly convex constraint is given by $\mathcal{B}_\psi(\xi)$. Similarly to \cref{prop:cot} (\cref{sec:proof_global}), we can frame OTARI as a $\psi$-Bregman projection of $\K_\sigma = \nabla \psi^*(-\C / \sigma)$ or solve it using dual ascent.
\begin{proposition}\label{prop:pcot}
  Let $(\bm{a}, \bm{b}, \xi)$ be such that $\Pi(\bm{a}, \bm{b}) \cap \mathcal{B}_\psi(\xi)$ has an interior point and let $\Pb^\star$ solve
  \begin{align}\label{eq:pcOT}
    \min_{\Pb \in \Pi(\bm{a}, \bm{b})} \: \langle \Pb, \C \rangle \quad \text{s.t.} \quad  \Pb \in \mathcal{B}_\psi(\xi) \:.
    \tag{OTARI-s}
\end{align}
Let $\bm{\varepsilon}^\star$ be the optimal dual variable associated with the constraint $\Pb \in \mathcal{B}_\psi(\xi)$.
If $\bm{\varepsilon}^\star \bm{>} \bm{0}$, then it holds $\Pb^\star = \operatorname{Proj}^{D_\psi}_{\Pi(\bm{a}, \bm{b}) \cap \mathcal{B}_\psi(\xi)}(\K_\sigma)$ for any $0 < \sigma \leq \min_i{\varepsilon_i^\star}$. Moreover it holds $\Pb^\star = \nabla \psi^* \left(\diag(\bm{\varepsilon}^\star)^{-1} (\C - \bm{\lambda}^\star \oplus \bm{\mu}^\star) \right)$ where $(\bm{\lambda}^\star, \bm{\mu}^\star, \bm{\varepsilon}^\star)$ solve the following dual
\begin{align}
  \max_{\bm{\lambda}, \bm{\mu}, \bm{\varepsilon} \bm{>} \bm{0}} \: \langle \bm{\lambda}, \bm{a} \rangle + \langle \bm{\mu}, \bm{b} \rangle + \left\langle \bm{\varepsilon}, \psi^*\left(\diag(\bm{\varepsilon})^{-1} (\C - \bm{\lambda} \oplus \bm{\mu}) \right) - \psi(\mathbf{e}_\xi) \bm{1}  \right\rangle \:.
  \tag{Dual-OTARI-s}
\end{align}
\end{proposition}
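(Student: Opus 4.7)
The argument is the row-wise analogue of the proof of \cref{prop:cot} given in \cref{sec:proof_global}, the only genuine novelty being that the single scalar multiplier $\varepsilon^\star$ is replaced by a vector $\bm{\varepsilon}^\star \in \R_+^{N_S}$, with one coordinate for each of the $N_S$ constraints $\psi(\Pb_{i:}) \leq \psi(\mathbf{e}_\xi)$. The plan is to form the Lagrangian of \eqref{eq:pcOT} with multipliers $\bm{\lambda} \in \R^{N_S}$ and $\bm{\mu} \in \R^{N_T}$ for the marginal equalities and $\bm{\varepsilon} \bm{\geq} \bm{0}$ for the pointwise inequalities. The interior-point hypothesis on $\Pi(\bm{a}, \bm{b}) \cap \mathcal{B}_\psi(\xi)$ is Slater's condition for this convex program, so strong duality holds and a dual optimum is attained.

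\textbf{Closed form and dual.} Under strong duality, $\Pb^\star$ minimises the Lagrangian over $\Pb \geq 0$. This Lagrangian decouples across the rows of $\Pb$, and the $i$-th subproblem reduces to $\min_{\Pb_{i:} \geq 0} \, \langle \Pb_{i:},\, \C_{i:} - \lambda_i^\star \bm{1} - \bm{\mu}^\star \rangle + \varepsilon_i^\star \psi(\Pb_{i:})$. Dividing by $\varepsilon_i^\star > 0$ and applying Danskin's theorem to the definition of $\psi^*$ exactly as in the scalar case, the unique minimiser is $\Pb_{i:}^\star = \nabla \psi^*((\C_{i:} - \lambda_i^\star \bm{1} - \bm{\mu}^\star)/\varepsilon_i^\star)$, which stacks into the announced formula $\Pb^\star = \nabla \psi^*(\diag(\bm{\varepsilon}^\star)^{-1}(\C - \bm{\lambda}^\star \oplus \bm{\mu}^\star))$. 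Substituting back into the Lagrangian and simplifying via the identity $\psi^*(\bm{p}) + \psi(\nabla \psi^*(\bm{p})) = \langle \bm{p}, \nabla \psi^*(\bm{p}) \rangle$ yields the stated dual objective.

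\textbf{Bregman projection and main obstacle.} For the projection characterisation, I would expand $D_\psi(\Pb, \K_\sigma) = \psi(\Pb) - \psi(\K_\sigma) - \langle \Pb - \K_\sigma, \nabla \psi(\K_\sigma) \rangle$, drop the terms that do not depend on $\Pb$, and use $\nabla \psi(\K_\sigma) = -\C/\sigma$ (since $\nabla \psi^* = (\nabla \psi)^{-1}$) to rewrite the projection as the auxiliary regularised problem $\min_{\Pb \in \Pi(\bm{a}, \bm{b}) \cap \mathcal{B}_\psi(\xi)} \, \langle \Pb, \C \rangle + \sigma \sum_i \psi(\Pb_{i:})$. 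Writing its KKT system and comparing with that of \eqref{eq:pcOT}, the candidate $\Pb^\star$ satisfies the new optimality conditions with the same $\bm{\lambda}^\star, \bm{\mu}^\star$ and with pointwise multipliers $\varepsilon_i^\star - \sigma$. The delicate step, and in my view the only real obstacle, is verifying admissibility of these shifted multipliers: one needs $\varepsilon_i^\star - \sigma \geq 0$, which is precisely the range $\sigma \leq \min_i \varepsilon_i^\star$; and one needs $\bm{\varepsilon}^\star \bm{>} \bm{0}$ so that complementary slackness forces every pointwise constraint to be active at $\Pb^\star$, making the added penalty $\sigma \sum_i \psi(\Pb_{i:})$ constant on the feasible optimal set and hence leaving the minimiser unchanged.
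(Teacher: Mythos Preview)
Your proposal is correct and follows essentially the same route as the paper: form the Lagrangian with a vector of multipliers $\bm{\varepsilon}$, invoke Slater/strong duality, decouple the inner minimisation row by row to obtain the $\nabla\psi^*$ formula and the dual via the conjugate, and for the projection part compare the KKT systems of \eqref{eq:pcOT} and of the Bregman projection after using $\nabla\psi(\K_\sigma)=-\C/\sigma$. The only cosmetic difference is a scaling convention: you first multiply the projection objective by $\sigma$ and then match multipliers as $(\bm{\lambda}^\star,\bm{\mu}^\star,\varepsilon_i^\star-\sigma)$, whereas the paper keeps the unscaled projection and matches with $(\bm{\rho}^\star/\sigma,\bm{\kappa}^\star/\sigma,\varepsilon_i^\star/\sigma-1)$; these are equivalent, and in both cases $\sigma\le\min_i\varepsilon_i^\star$ is exactly what makes the shifted multiplier nonnegative while $\bm{\varepsilon}^\star>\bm{0}$ is what activates the pointwise constraints so that complementary slackness transfers.
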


\begin{figure*}[t]
  \begin{center}
  \centerline{\includegraphics[width=\columnwidth]{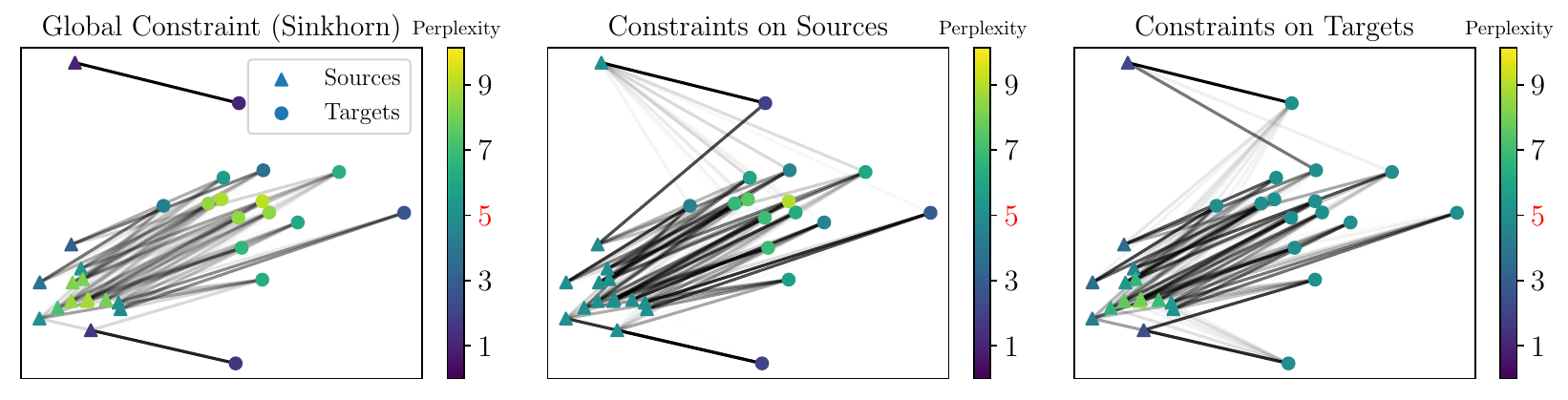}}
  \caption{Entropic OT plans ($\xi=5$) with global constraint, pointwise constraints on sources and then on targets. The three plans have the same global entropy. The color of each source (resp. target) point shows the perplexity (exponential of entropy) of the associated row (resp. column) of the OT plan.}
  \label{fig:entropic_ot_plans}
  \end{center}
  \vspace{-0.5cm}
\end{figure*}

\begin{wrapfigure}[15]{L}{0.52\textwidth}
  \begin{minipage}{0.52\textwidth}
\begin{algorithm}[H]
  \caption{\textit{Dykstra} for solving (OTARI-d)}
  \label{algo:Dykstra_pcot}
  \begin{algorithmic}[1]
      \STATE {\textbf{Input}: $\C$, $\psi(\cdot)$, $\xi^{a}$, $\xi^{b}$, $\varepsilon$}, $\bm{a}$, $\bm{b}$ \\
            \STATE $\left(\Pb_b, \bm{\Xi}, \bm{\Theta} \right) \leftarrow \left(\nabla \psi^{\star}(-\C / \varepsilon), \mathbf{0}, \mathbf{0} \right)$ \\
      \WHILE{not converged}
                    \STATE $\Pb_a \leftarrow \operatorname{Proj}^{D_\psi}_{\Pi(\bm{a})}(\Pb_b)$ 
          \\
          \STATE $\overline{\Pb}_{a} \leftarrow \operatorname{Proj}^{D_\psi}_{\mathcal{B}_{\psi}(\xi^{a})}\circ \nabla \psi^*(\nabla \psi(\Pb_a) + \bm{\Xi})$ 
          \\
          \STATE $\bm{\Xi} \leftarrow \bm{\Xi} + \nabla \psi(\Pb_{a}) - \nabla \psi(\overline{\Pb}_{a})$
          \\
          \STATE $\Pb^\top_b \leftarrow \operatorname{Proj}^{D_\psi}_{\Pi(\bm{b})}(\overline{\Pb}_{a}^\top)$ 
          \\
          \STATE $\overline{\Pb}_{b}^\top \leftarrow \operatorname{Proj}^{D_\psi}_{\mathcal{B}_{\psi}(\xi^{b})}\circ \nabla \psi^*((\nabla \psi(\Pb_b) + \bm{\Theta})^\top)$ 
          \\
          \STATE $\bm{\Theta} \leftarrow \bm{\Theta} + \nabla \psi(\Pb_{b}) - \nabla \psi(\overline{\Pb}_{b})$
      \ENDWHILE  
      \STATE {\bfseries Output: $\overline{\Pb}_{b}$}
\end{algorithmic}
\end{algorithm}
\end{minipage}
\end{wrapfigure}

According to \cref{prop:pcot}, one can solve \eqref{eq:pcOT} using either alternating projections or dual ascent. 
When $\bm{\varepsilon}^\star \bm{>} \bm{0}$, meaning that all constraints are active \ie $\forall i, \: \psi(\Pb^\star_{i:}) = \psi(\mathbf{p}_{\xi})$, dual ascent is usually faster. However, if $\bm{\varepsilon}^\star$ has null components, one can still rely on $\operatorname{Proj}^{D_\psi}_{\Pi(\bm{a}, \bm{b}) \cap \mathcal{B}_\psi(\xi)}(\K_\varepsilon)$ to provide an approximate solution as alternating Bregman projections are always guaranteed to converge.

Note that we can impose the pointwise constraint equivalently on the rows or the columns of the OT plan. Hence (OTARI-t) can be defined by imposing the constraint on the target samples \textit{i.e.}\ $\Pb^\top \in \mathcal{B}_\psi(\xi)$.
We also propose a doubly constrained formulation called (OTARI-d) that consists of projecting $\K_\sigma$ onto the nonempty set $\mathcal{B}_\psi(\xi^{a}) \cap \mathcal{B}^\top_\psi(\xi^b)$ 
where we defined $\mathcal{B}^\top_\psi(\xi) = \{ \Pb^\top \in \mathcal{B}_\psi(\xi)\}$ thus ensuring sufficient smoothing for both rows and columns.
Such projection can be computed using alternating Bregman projections, whose convergence has been well-studied \cite{censor1998dykstra, benamou2015iterative}.
As we generally do not have access to a closed form for the projection onto the transport polytope $\Pi(\bm{a}, \bm{b})$, it is common to alternate projection onto $\Pi(\bm{a})$ and $\Pi(\bm{b})$ separately (see \eg the seminal Sinkhorn algorithm \cite{cuturi2013sinkhorn}).
We extend this scheme by adding projection steps into the pointwise constraints $\mathcal{B}_\psi(\xi)$ for both $\Pb$ and $\Pb^\top$. As this set is not affine, one needs to resort to the Dykstra procedure \cite{dykstra1983algorithm} that can be applied to a broad class of Bregman divergences \cite{bauschke2000dykstras}, as shown in \cref{algo:Dykstra_pcot}.
In \cref{sec:proof_projs}, we provide the form of the projections for $\psi_{\KL}$ and $\psi_2$. A key benefit of decoupling both row and column constraints is that projection onto $\mathcal{B}_{\psi}(\xi)$ exhibits a simple structure where the rows can be decoupled into independent subproblems.

\section{Application to Domain Adaptation}

\begin{figure}[h]
  \begin{center}
  \centerline{\includegraphics[width=\columnwidth]{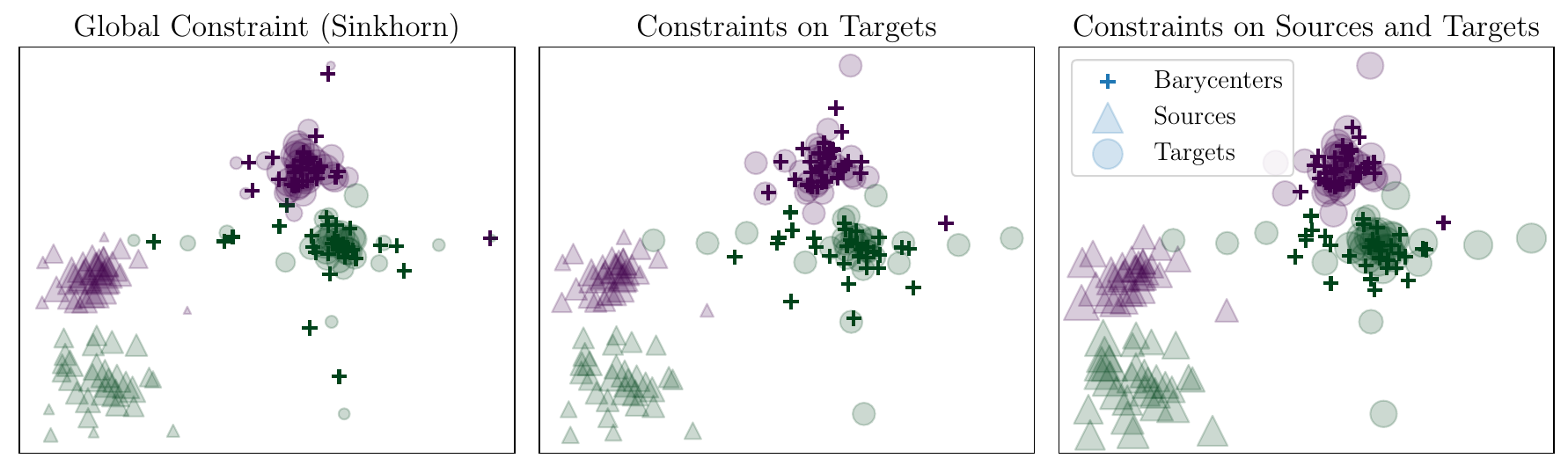}}
  \caption{Toy domain adaptation scenario with entropic OT plans ($\xi=10$) with various constraints. The size of the point is proportional to the associated entropy. When using Sinkhorn, barycentric mapping match outliers points since the OT plan is less diffuse for these points. In turn, using pointwise constraints concentrate the 
   mapped points in high-density regions, thus giving more robust estimates for the mappings onto the target domain.}
  \label{fig:visu_DA}
  \end{center}
  \vspace{0.2cm}
\end{figure}

\begin{table}[h]
          \centering
        \begin{small}
    \begin{tabular}{lc@{\hskip 0.1in}c@{\hskip 0.1in}c@{\hskip 0.1in}c@{\hskip 0.1in}c@{\hskip 0.1in}c}
    \toprule[1.5pt]
    & OT& EOT & EOTARI-s & EOTARI-t & EOTARI-d \\
    \midrule
    MNIST $\to$ USPS ($\xi=30$) & $53.1(5.4)$ & $64.2(2.8)$ & $65.0(5.3)$ & $66.4(3.5)$ & $\mathbf{67.4(2.9)}$ \\
    MNIST $\to$ USPS ($\xi=300$) & $53.1(5.4)$ & $68.8(3.1)$ & $70.8(4.2)$ & $70.2(3.4)$ & $\mathbf{72.6(5.1)}$ \\
  
    USPS $\to$ MNIST ($\xi=30$) & $59.1(4.9)$ & $60.8(5.4)$ & $61.6(4.4)$ & $\mathbf{62.6(3.0)}$ & $61.0(4.7)$ \\
    USPS $\to$ MNIST ($\xi=300$) & $59.1(4.9)$ & $59.8(1.6)$ & $61.0(2.3)$ & $\mathbf{61.6(3.0)}$ & $58.8(2.3)$ \\
    \midrule
    \midrule
    & OT & QOT & QOTARI-s & QOTARI-t & QOTARI-d \\
    \midrule
    MNIST $\to$ USPS ($\xi=30$) & $53.1(5.4)$ & $68.3(3.9)$ & $68.3(3.6)$ & $\mathbf{69.3(4.7)}$ & $68.1(4.6)$ \\
    MNIST $\to$ USPS ($\xi=300$) & $53.1(5.4)$ & $60.7(1.5)$ & $\mathbf{67.0(2.4)}$ & $65.5(2.3)$ & $65.8(2.5)$ \\
    USPS $\to$ MNIST ($\xi=30$) & $59.1(4.9)$ & $60.4(3.5)$ & $\mathbf{62.8(3.7)}$ & $59.6(2.7)$ & $61.6(3.1)$ \\
    USPS $\to$ MNIST ($\xi=300$) & $59.1(4.9)$ & $59.2(3.4)$ & $60.1(3.0)$ & $\mathbf{62.0(3.7)}$ & $61.5(3.8)$ \\
    \bottomrule[1.5pt]
    \end{tabular}
    \end{small}
    \vspace{0.4cm}
    \caption{Domain adaptation 1-NN classification scores for OT (unregularised), EOT (entropic), EOTARI (entropic OTARI), QOT (quadratic), QOTARI (quadratic OTARI) for $\xi=30$ and $\xi=300$.}
    \label{tab:da_exps}
    \vspace{0.1cm}
    \end{table}

In this section, we evaluate OTARI on a domain adaptation task where the goal is to transport labeled data points to a target domain where a classifier is trained. Mapping onto the target domain is performed through a barycentric mapping of the form: for any $i \in \integ{N_S}$, $\hat{\bm{x}}_i = \frac{1}{a_i} \sum_j T_{ij} \bm{x}_j^T$.
Looking at \cref{fig:visu_DA}, one can notice that using OTARI for domain adaptation yields a mapping that is concentrated in high-density (thus more faithful) regions of the target domain. On the opposite, when using globally constrained OT (left side of \cref{fig:visu_DA}), the barycentric mapping associated with an outlier is concentrated on the outlier's position. For the experiments, we take $\C$ as the squared Euclidean distance computed from raw images of the handwritten digit classification benchmark MNIST-USPS. 
Following the standard practice in OT-based domain adaptation \cite{flamary2016optimal}, we map the source to the target samples and then train a 1-NN classifier on the barycentric mappings with source labels.
We compute the outcomes across 10 independent trials. In each of these experiments, the target data is partitioned into a 90\% training and 10\% testing split, with OT barycentric mappings and 1-NN classifiers exclusively applied to the training set. Mean scores and standard deviations are displayed in \cref{tab:da_exps}. The latter shows that adaptive regularisation consistently outperforms global regularisation (set such that the average perplexity is $\xi$ for a fair comparison) with significant performance gains in some settings (see \textit{e.g.}\ MNIST $\to$ USPS ($\xi=300$) with the quadratic regularisation).
\section{Conclusion}

In this work, we presented a versatile framework to control the value of any OT regulariser in source or/and target locations. We showed encouraging preliminary results for domain adaptation that will be investigated in upcoming works. One could also extend OTARI to continuous distributions and apply it to OT mapping
estimation \cite{pooladian2021entropic}. Other interesting directions include
investigating optimization algorithms that can avoid quadratic memory complexity.

\newpage
\appendix

\section{Notations}\label{sec:notations}

We adopt the conventions that $0/0 = 0$, $0 \log(0) = 0$ and $x/0 = \infty$ for $x > 0$.
$\exp$, $\log$ applied to vectors/matrices are taken element-wise.  
$\bm{1}$ is the all-one vector whose size depends on the context.
$\langle \cdot, \cdot \rangle$ is the standard inner product for matrices/vectors. 
$P_{ij}$ denotes the entry at position $(i,j)$ of a matrix $\mathbf{P}$ while $\mathbf{P}_{i:}$ and denotes the $i$-th row. 
$\Pb \bm{\geq} \bm{0}$ means that for any $(i,j)$, $P_{ij} \geq 0$.
$\odot$ (\textit{resp.} $\oslash$) stands for element-wise multiplication (\textit{resp.} division) between vectors/matrices. 
For $\bm{a}, \bm{b} \in \R^n, \bm{a} \oplus \bm{b} \in \R^{n \times n}$ is $(a_i + b_j)_{ij}$.
For $\alpha \in \R$, $\bm{p}^{\odot \alpha}$ and $\Pb^{\odot \alpha}$ denote element-wise exponentiation \ie $[\bm{p}^{\odot \alpha}]_i = p_i^\alpha$.
$[\Pb]_+$ is the element-wise positive part with $\max(0,P_{ij})$ in position $(i,j)$.
For $n \in \mathbb{N}$, $\Delta^{n}$ is the probability simplex $\{ \bm{p} \in \R_+^n \: \text{s.t.} \: \sum_i p_i = 1 \}$.
For $\bm{a} \in \Delta^{N_S}$ and $\bm{b} \in \Delta^{N_T}$, $\Pi(\bm{a}, \bm{b}) = \{ \Pb \in \R_+^{N_S \times N_T} \: \text{s.t.} \: \Pb \bm{1} = \bm{a} \: \text{and} \: \Pb^\top \bm{1} = \bm{b} \}$ is the transport polytope with marginals $(\bm{a}, \bm{b})$ while $\Pi(\bm{a}) = \{ \Pb \in \R_+^{N_S \times N_T} \: \text{s.t.} \: \Pb \bm{1} = \bm{a} \}$ is the semi-relaxed transport polytope.
For a set $\mathcal{E}$ and a divergence $D$, $\operatorname{Proj}^{D}_{\mathcal{E}}(\K)  =  \argmin_{\Pb \in\mathcal{E}} D(\Pb | \K)$.

\section{Proofs of the Optimal Transport Solutions}

\subsection{Optimal Transport with Global Constraint}\label{sec:proof_global}

\begin{prop}\label{prop:cot}
    Let $\psi : \R^{N_S} \to \R$ satisfy \cref{assumption_psi}. We define $\overline{\mathcal{B}}(\eta) \coloneqq \{ \Pb \: \text{s.t.} \: \sum_i \psi(\Pb_{i:}) \leq \eta \}$. Let $(\bm{a}, \bm{b}, \eta)$ be such that $\Pi(\bm{a}, \bm{b}) \cap \overline{\mathcal{B}}(\eta)$ has an interior point and let $\Pb^\star$ be a solution of
\begin{align}
    \min_{\Pb \in \Pi(\bm{a}, \bm{b})} \: \langle \Pb, \C \rangle \quad \text{s.t.} \quad  \Pb \in \overline{\mathcal{B}}(\eta) \:.
    \tag{ROT}
\end{align}
    Let $\varepsilon^\star$ be an optimal dual variable associated with $\Pb \in \overline{\mathcal{B}}(\eta)$.
    If $\varepsilon^\star > 0$, then for any $0 < \sigma \leq \varepsilon^\star$, it holds $\Pb^\star = \operatorname{Proj}^{D_\psi}_{\Pi(\bm{a}, \bm{b}) \cap \overline{\mathcal{B}}(\eta)}(\K_\sigma)$
    where $\K_\sigma \coloneqq \nabla \psi^*(-\C/ \sigma)$.
    One also has $\Pb^\star = \nabla \psi^*((\C - \bm{\lambda}^\star \oplus \bm{\mu}^\star) / \varepsilon^\star)$
    where $(\bm{\lambda}^\star, \bm{\mu}^\star, \varepsilon^\star)$ solve
    \begin{align}
        \max_{\bm{\lambda}, \bm{\mu}, \varepsilon>0}  \: \langle \bm{\lambda}, \bm{a} \rangle + \langle \bm{\mu}, \bm{b} \rangle + \varepsilon \Big(\sum_i \psi^*((\C_{i:} - \lambda_i \bm{1} - \bm{\mu}) / \varepsilon) - \eta \Big) \:.
    \tag{Dual-ROT}
    \end{align}
\end{prop}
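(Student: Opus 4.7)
The proposition has two parts: the primal-dual characterisation of $\Pb^\star$ via Fenchel duality (yielding (Dual-ROT)), and the identification of $\Pb^\star$ as a $\psi$-Bregman projection of the kernel $\K_\sigma$. The plan is to establish strong duality, derive the KKT system, extract the explicit form of $\Pb^\star$ using $\nabla\psi^*$, and then recognise the projection problem as a reweighted version of the same program.

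First I would argue strong duality: the objective $\langle\Pb,\C\rangle$ is linear, the marginal constraints are affine, and $\Pb\mapsto \sum_i\psi(\Pb_{i:})$ is convex by \cref{assumption_psi}; the hypothesis that $\Pi(\bm{a},\bm{b})\cap\overline{\mathcal{B}}(\eta)$ has an interior point supplies Slater's condition, so strong duality holds and the KKT system characterises any primal-dual optimum. Writing the Lagrangian with multipliers $\bm{\lambda},\bm{\mu}$ for the marginal equalities and $\varepsilon\geq 0$ for the $\psi$-constraint, stationarity in $\Pb$ rewrites row-wise as $\varepsilon^\star\nabla\psi(\Pb^\star_{i:}) = (\bm{\lambda}^\star\oplus\bm{\mu}^\star - \C)_{i:}$; since $\psi$ is strictly convex and differentiable on $\mathrm{int}(\mathrm{dom}(\psi))$, the gradient $\nabla\psi$ is a bijection onto its image with inverse $\nabla\psi^*$ (the content of Danskin's theorem applied to the conjugate, as recalled in the excerpt), so inverting yields the stated closed form for $\Pb^\star$. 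Substituting $\Pb^\star$ back into $L$ and using the Fenchel identity $\psi^*(\nabla\psi(\bm{q})) = \langle\bm{q},\nabla\psi(\bm{q})\rangle - \psi(\bm{q})$ on each row collapses the inner-product and entropy terms, leaving exactly the (Dual-ROT) objective.

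For the Bregman-projection statement, the key observation is that $\nabla\psi(\K_\sigma) = -\C/\sigma$ by construction of $\K_\sigma = \nabla\psi^*(-\C/\sigma)$. Expanding $D_\psi(\Pb,\K_\sigma)$ row-wise then shows that, up to a $\Pb$-independent constant, minimising $D_\psi(\Pb,\K_\sigma)$ over $\Pi(\bm{a},\bm{b})\cap\overline{\mathcal{B}}(\eta)$ is equivalent to minimising $\sigma\sum_i\psi(\Pb_{i:}) + \langle\Pb,\C\rangle$ on the same set. Writing the KKT system for this auxiliary problem (same primal feasibility, with the multiplier on $\sum_i\psi(\Pb_{i:})\leq\eta$ denoted $\tilde{\varepsilon}\geq 0$), the stationarity equation becomes $(\sigma+\tilde{\varepsilon})\nabla\psi(\tilde{\Pb}) = \tilde{\bm{\lambda}}\oplus\tilde{\bm{\mu}} - \C$. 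Choosing $(\tilde{\bm{\lambda}},\tilde{\bm{\mu}},\tilde{\varepsilon}) = (\bm{\lambda}^\star,\bm{\mu}^\star,\varepsilon^\star-\sigma)$ reproduces the stationarity of the original problem; complementary slackness transfers because $\varepsilon^\star>0$ forces $\sum_i\psi(\Pb^\star_{i:})=\eta$ to be active, and dual feasibility $\tilde{\varepsilon}\geq 0$ is exactly the hypothesis $\sigma\leq\varepsilon^\star$. Uniqueness of the Bregman projection onto the convex set then yields $\operatorname{Proj}^{D_\psi}_{\Pi(\bm{a},\bm{b})\cap\overline{\mathcal{B}}(\eta)}(\K_\sigma) = \Pb^\star$.

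The main obstacle I expect is the bookkeeping for the projection argument: one must handle the two interpretations of the $\psi$-constraint (as an active equality in the KKT of the original problem, versus as an inequality in the projection problem with a shifted multiplier) and verify that the candidate dual variables satisfy all KKT conditions of the projection problem simultaneously. A secondary care point is ensuring $\Pb^\star\in\mathrm{int}(\mathrm{dom}(\psi))$ so that $\nabla\psi^*$ is applicable — this is covered by the Slater assumption combined with strict convexity of $\psi$.
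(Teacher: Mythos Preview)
Your proposal is correct and follows essentially the same route as the paper: Lagrangian/Fenchel duality under Slater to obtain (Dual-ROT) and the closed form $\Pb^\star=\nabla\psi^*(\cdot)$, then matching the KKT systems of (ROT) and of the Bregman-projection problem via the multiplier shift $\tilde\varepsilon=\varepsilon^\star-\sigma$ (the paper rescales all multipliers by $1/\sigma$ and writes this as $\nu=\varepsilon^\star/\sigma-1$). The one piece of bookkeeping you gloss over but the paper carries explicitly is the multiplier $\mathbf{\Omega}$ for the constraint $\Pb\geq\bm{0}$, which must also be matched (as $\mathbf{\Omega}=\mathbf{\Lambda}^\star/\sigma$) in the KKT comparison; your remark that Slater plus strict convexity forces $\Pb^\star\in\mathrm{int}(\mathrm{dom}(\psi))$ does not quite cover this, since for $\psi_2$ the optimum can sit on the boundary of $\R_+^{N_S\times N_T}$.
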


\begin{proof}
    We first show that $\Pb^\star = \operatorname{Proj}^{D_\psi}_{\Pi(\bm{a}, \bm{b}) \cap \overline{\mathcal{B}}(\eta)}(\K_\varepsilon)$ before focusing on the dual problem.

    \textbf{\underline{Part I : Proof of the Bregman projection.}}

    Simplifying the constant terms $\operatorname{Proj}^{D_\psi}_{\Pi(\bm{a}, \bm{b}) \cap \overline{\mathcal{B}}(\eta)}(\K_\varepsilon)$ boils down to the following problem
    \begin{align}
        \min_{\Pb} \quad &\sum_i \psi(\Pb_{i:}) - \langle \Pb, \nabla \psi(\K_\sigma) \rangle \\
        \text{s.t.} \quad &\sum_i \psi(\Pb_{i:}) \leq \eta \\
        & \Pb \bm{1} = \bm{a}, \quad \Pb^\top \bm{1} = \bm{b} \\
        & \Pb \in \R_+^{n \times n} \:.
    \end{align}
    This problem is convex and strictly feasible. Strong duality holds thanks to Slater's constraint qualification. Therefore the KKT conditions \cite{boyd2004convex} are necessary and sufficient conditions for optimality.
    The Lagrangian can be expressed as
    \begin{align}
        \mathcal{L}(\Pb, \nu, \bm{\lambda}, \bm{\mu}) &= \sum_i \psi(\Pb_{i:}) - \langle \Pb, \nabla \psi(\K_\sigma) \rangle + \nu\Big(\sum_i \psi(\Pb_{i:}) - \eta \Big) \\
        &+ \langle \bm{\lambda}, \bm{a} - \Pb \bm{1} \rangle + \langle \bm{\mu}, \bm{b} - \Pb^\top \bm{1} \rangle - \langle \mathbf{\Omega}, \Pb \rangle \:.
    \end{align}
    Any optimal primal-dual variables $(\Pb^\star, \nu^\star, \bm{\lambda}^\star, \bm{\mu}^\star, \mathbf{\Omega}^\star)$ satisfies
    \begin{align}
            \nabla_{\Pb} \mathcal{L}(\Pb^\star,  \nu^\star, \bm{\lambda}^\star, \bm{\mu}^\star) &=  (\nu^\star + 1) \nabla \psi(\Pb^\star) - \nabla \psi(\K_\sigma) - \bm{\lambda}^\star \oplus \bm{\mu}^\star - \mathbf{\Omega}^\star = \bm{0} \:.
    \end{align}
    Note that by definition $\K_\sigma = \nabla \psi^\star(-\C/\epsilon)$ and thus $\nabla \psi(\K_\sigma) = \nabla \psi[\nabla \psi^\star(-\C/\sigma)] = -\C / \sigma$ \cite{rockafellar1997convex}. Thus we have 
    \begin{align}
        \C + \sigma (\nu^\star + 1) \nabla \psi(\Pb^\star) -  \sigma \bm{\lambda}^\star \oplus \bm{\mu}^\star - \sigma \mathbf{\Omega}^\star= \bm{0} \:.
    \end{align}
    Similarly, for the optimal transport problem \eqref{prop:cot}
    \begin{align}
        \min_{\Tb} \: \langle \Tb, \C \rangle \quad \text{s.t.} \quad  \Tb \in \Pi(\bm{a}, \bm{b}) \cap \overline{\mathcal{B}}(\eta) \:,
    \end{align}
    we get the optimality condition for optimal primal-dual variables $(\Tb^\star, \varepsilon^\star, \bm{\rho}^\star, \bm{\kappa}^\star, \mathbf{\Lambda}^\star)$
    \begin{align}
        \C + \varepsilon^\star \nabla \psi(\Tb^\star) -  \bm{\rho}^\star \oplus \bm{\kappa}^{\star} - \mathbf{\Lambda}^\star = \bm{0} \:.
    \end{align}
    Focusing on the original Bregman projection problem, we can then consider
    \begin{align}
        \left\{
        \begin{array}{ll}
            \bm{\lambda} &= \bm{\rho}^\star / \sigma \\
            \bm{\mu} &= \bm{\kappa}^\star / \sigma \\
            \nu &= \varepsilon^\star / \sigma - 1 \\
            \mathbf{\Omega} &= \mathbf{\Lambda}^\star / \sigma \\
            \Pb &= \Tb^\star \:.
        \end{array}
    \right.
    \end{align}
    With the above choice, 
    \begin{itemize}
        \item  $(\Pb, \nu, \bm{\lambda}, \bm{\mu}, \mathbf{\Omega})$ satisfies the first order optimality condition.
        \item $\Pb = \Tb^\star \in \Pi(\bm{a}, \bm{b}) \cap \mathcal{E}(\eta)$ thus the primal constraint is satisfied. 
        \item $\sigma \leq \varepsilon^\star$ implies that $\nu \geq 0$ and $\mathbf{\Omega}$ has positive entries thereby dual constraints are satisfied. 
        \item $\varepsilon^\star \neq 0$ thus by complementary slackeness $\psi(\Tb^\star) = \eta$ hence complementary slackness is also verified for $\Pb$ since $\Pb = \Tb^\star$.
    \end{itemize}
    Therefore the KKT conditions are met hence $(\Pb, \nu, \bm{\lambda}, \bm{\mu}, \mathbf{\Omega}) = (\Pb^\star, \nu^\star, \bm{\lambda}^\star, \bm{\mu}^\star, \mathbf{\Omega}^\star)$ and $\Pb^\star = \Tb^\star$.
\end{proof}

\textbf{\underline{Part II : Proof of dual ascent.}}

The optimal dual variables $(\bm{\lambda}^\star, \bm{\mu}^\star, \varepsilon^\star)$ solve the following problem 
\begin{align}
    &\max_{\bm{\lambda}, \bm{\mu}, \varepsilon > 0} \: \min_{\Pb \bm{\geq} \bm{0}} \: \langle \Pb, \C \rangle + \langle \bm{\lambda}, \bm{a} - \Pb \bm{1} \rangle + \langle \bm{\mu}, \bm{b} - \Pb^\top \bm{1} \rangle + \varepsilon \Big(\sum_i \psi(\Pb_{i:}) - \eta \Big) \\
    = &\max_{\bm{\lambda}, \bm{\mu}, \varepsilon> 0} \: \langle \bm{\lambda}, \bm{a} \rangle + \langle \bm{\mu}, \bm{b} \rangle - \varepsilon \eta + \min_{\Pb \bm{\geq} \bm{0}} \: \langle \Pb, \C - \bm{\lambda}\bm{1}^\top - \bm{1}\bm{\mu}^\top \rangle + \varepsilon \sum_i \psi(\Pb_{i:}) \\
    = &\max_{\bm{\lambda}, \bm{\mu}, \varepsilon>0} \: \langle \bm{\lambda}, \bm{a} \rangle + \langle \bm{\mu}, \bm{b} \rangle - \varepsilon \eta + \min_{\Pb \bm{\geq} \bm{0}} \: \sum_i \langle \Pb_{i:}, \C_{i:} - \lambda_i \bm{1} - \bm{\mu} \rangle + \varepsilon \psi(\Pb_{i:}) \\
    \stackrel{(\star)}{=} &\max_{\bm{\lambda}, \bm{\mu}, \varepsilon>0}  \: \langle \bm{\lambda}, \bm{a} \rangle + \langle \bm{\mu}, \bm{b} \rangle + \varepsilon \Big(\sum_i \psi^*((\C_{i:} - \lambda_i \bm{1} - \bm{\mu}) / \varepsilon) - \eta \Big) \:.
    \tag{Dual-ROT}
    \label{eq:dual-ROT}
\end{align}
In ($\star$) we have used that $\psi^*(\bm{x}) = \sup_{\bm{y} \bm{\geq} \bm{0}} \langle \bm{x}, \bm{y} \rangle - \psi(\bm{y})$.
From Danskin's theorem \cite{danskin1966theory}, one can recover the solution of the primal
\begin{align}
    \forall i, \: \Pb^\star_{i:} = \nabla \psi^*((\C_{i:} - \lambda^\star_i \bm{1} - \bm{\mu}^\star) / \varepsilon^\star) \:.
\end{align}
Using matrix notations yields $\Pb^\star = \nabla \psi^*((\C - \bm{\lambda}^\star \oplus \bm{\mu}^\star) / \varepsilon^\star)$
where $(\bm{\lambda}^\star, \bm{\mu}^\star, \varepsilon^\star)$ are the solution of the dual problem \eqref{eq:dual-ROT}.

\subsection{OT with Pointwise Constraints on Either Sources or Targets (OTARI-s and OTARI-t) : proof of \cref{prop:pcot}}\label{sec:proofs}

\begin{prop}
    Let $(\bm{a}, \bm{b}, \xi)$ be such that $\Pi(\bm{a}, \bm{b}) \cap \mathcal{B}_\psi(\xi)$ has an interior point and let $\Pb^\star$ solve
    \begin{align}
      \min_{\Pb \in \Pi(\bm{a}, \bm{b})} \: \langle \Pb, \C \rangle \quad \text{s.t.} \quad  \Pb \in \mathcal{B}_\psi(\xi) \:.
      \tag{OTARI-s}
  \end{align}
  Let $\bm{\varepsilon}^\star$ be the optimal dual variable associated with the constraint $\Pb \in \mathcal{B}_\psi(\xi)$.
  If $\bm{\varepsilon}^\star \bm{>} \bm{0}$, then it holds $\Pb^\star = \operatorname{Proj}^{D_\psi}_{\Pi(\bm{a}, \bm{b}) \cap \mathcal{B}_\psi(\xi)}(\K_\varepsilon)$ for any $0 < \varepsilon \leq \min_i{\varepsilon_i^\star}$. Moreover it holds $\Pb^\star = \nabla \psi^* \left(\diag(\bm{\varepsilon}^\star)^{-1} (\C - \bm{\lambda}^\star \oplus \bm{\mu}^\star) \right)$ where $(\bm{\lambda}^\star, \bm{\mu}^\star, \bm{\varepsilon}^\star)$ solve the following dual
  \begin{align}
    \max_{\bm{\lambda}, \bm{\mu}, \bm{\varepsilon} \bm{>} \bm{0}} \: \langle \bm{\lambda}, \bm{a} \rangle + \langle \bm{\mu}, \bm{b} \rangle + \left\langle \bm{\varepsilon}, \psi^*\left(\diag(\bm{\varepsilon})^{-1} (\C - \bm{\lambda} \oplus \bm{\mu}) \right) - \psi(\mathbf{e}_\xi) \bm{1}  \right\rangle \:.
    \tag{Dual-OTARI-s}
  \end{align}
\end{prop}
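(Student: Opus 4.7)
The plan is to mirror the two-part proof of \cref{prop:cot}, replacing the scalar Lagrange multiplier $\varepsilon^\star$ by the vector $\bm{\varepsilon}^\star$ that carries one component per row-wise constraint. The pointwise nature of $\mathcal{B}_\psi(\xi)$ decouples across the rows of $\Pb$, so the Lagrangian/KKT argument extends mechanically; the only new ingredients are the row-wise matching condition $\sigma \leq \min_i \varepsilon_i^\star$ and the rescaling $\diag(\bm{\varepsilon}^\star)^{-1}$ that appears in the primal recovery formula.

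\textbf{Bregman projection identity.} First I would write $\operatorname{Proj}^{D_\psi}_{\Pi(\bm{a},\bm{b}) \cap \mathcal{B}_\psi(\xi)}(\K_\sigma)$ as a convex program with row-wise constraints $\psi(\Pb_{i:}) \leq \psi(\mathbf{e}_\xi)$ for each $i$, together with the marginal and nonnegativity constraints. The interior-point hypothesis delivers Slater's qualification, making KKT necessary and sufficient. Introducing multipliers $\bm{\nu}\bm{\geq}\bm{0}$ for the pointwise constraints and using $\nabla\psi(\K_\sigma)_{i:} = -\C_{i:}/\sigma$, the row-wise stationarity condition becomes
\begin{align*}
\C_{i:} + \sigma(\nu_i + 1)\nabla\psi(\Pb^\star_{i:}) - \sigma\lambda^\star_i \bm{1} - \sigma\bm{\mu}^\star - \sigma \mathbf{\Omega}^\star_{i:} = \bm{0} \:.
\end{align*}
The analogous KKT for \eqref{eq:pcOT} with optimal multipliers $(\bm{\varepsilon}^\star, \bm{\rho}^\star, \bm{\kappa}^\star, \mathbf{\Lambda}^\star)$ reads $\C_{i:} + \varepsilon_i^\star\nabla\psi(\Tb^\star_{i:}) - \rho_i^\star\bm{1} - \bm{\kappa}^\star - \mathbf{\Lambda}^\star_{i:} = \bm{0}$. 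Setting $\nu_i = \varepsilon_i^\star/\sigma - 1$, $\bm{\lambda} = \bm{\rho}^\star/\sigma$, $\bm{\mu} = \bm{\kappa}^\star/\sigma$, $\mathbf{\Omega} = \mathbf{\Lambda}^\star/\sigma$, and $\Pb = \Tb^\star$ makes the two systems coincide. Dual feasibility $\nu_i \geq 0$ for every $i$ is precisely $\sigma \leq \min_i \varepsilon_i^\star$; complementary slackness is inherited because $\bm{\varepsilon}^\star\bm{>}\bm{0}$ forces $\psi(\Tb^\star_{i:}) = \psi(\mathbf{e}_\xi)$ for all $i$. Strict convexity of $D_\psi$ in its first argument finally identifies $\Pb^\star = \Tb^\star$.

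\textbf{Dual problem.} Next I would derive the dual by assigning each pointwise constraint its own multiplier $\varepsilon_i \geq 0$ alongside $(\bm{\lambda}, \bm{\mu})$ for the marginals. The inner minimum splits into independent row-wise subproblems
\begin{align*}
\min_{\Pb_{i:}\bm{\geq}\bm{0}} \langle \Pb_{i:}, \C_{i:} - \lambda_i \bm{1} - \bm{\mu} \rangle + \varepsilon_i \psi(\Pb_{i:}) = \varepsilon_i \psi^*\bigl((\C_{i:} - \lambda_i \bm{1} - \bm{\mu})/\varepsilon_i \bigr),
\end{align*}
exactly as in step $(\star)$ of the proof of \cref{prop:cot} with $\varepsilon$ replaced by $\varepsilon_i$. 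Summing over $i$ and absorbing the terms $-\varepsilon_i \psi(\mathbf{e}_\xi)$ yields the stated (Dual-OTARI-s) objective. A row-wise application of Danskin's theorem then recovers $\Pb^\star_{i:} = \nabla\psi^*\bigl((\C_{i:} - \lambda_i^\star\bm{1} - \bm{\mu}^\star)/\varepsilon_i^\star\bigr)$, which packs into matrix form as $\nabla\psi^*\bigl(\diag(\bm{\varepsilon}^\star)^{-1}(\C - \bm{\lambda}^\star \oplus \bm{\mu}^\star)\bigr)$.

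\textbf{Main obstacle.} The chief difficulty is bookkeeping rather than any new idea: the scalar $\varepsilon^\star$ of the ROT proof becomes a vector $\bm{\varepsilon}^\star$ and each inequality must be tracked per row. The strict positivity assumption $\bm{\varepsilon}^\star\bm{>}\bm{0}$ is essential---if any $\varepsilon_i^\star$ vanished, complementary slackness would cease to force $\psi(\Tb^\star_{i:}) = \psi(\mathbf{e}_\xi)$ on that row, breaking the KKT match with the Bregman projection, which explains precisely why the text must fall back on alternating projections when some dual coordinates are zero.
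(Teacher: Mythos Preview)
Your proposal is correct and follows essentially the same approach as the paper: both parts mirror the ROT argument with the scalar multiplier replaced by a vector $\bm{\varepsilon}^\star$, matching the two KKT systems via $\nu_i = \varepsilon_i^\star/\sigma - 1$ and then deriving the dual by row-wise decoupling and Danskin's theorem. Your presentation is in fact more explicit than the paper's, which simply remarks that the proof is ``almost identical'' to that of \cref{prop:cot} with $\bm{\nu}^\star$ now vectorial.
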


\begin{proof}
Again we break down the proof, focusing on the primal and then on the dual approach.

\textbf{\underline{Part I : Proof of the Bregman projection.}}

The proof is almost identical to the one in \cref{prop:cot}. We use the same notations for simplicity. The only difference brought by the pointwise constraint is that $\bm{\nu}^\star$ is now vectorial. The first-order optimality condition for the Bregman projection problem reads
\begin{align}
    \C + \sigma (\diag{(\bm{\nu}^\star)} + \mathbf{I}_{N_S}) \nabla \psi(\Pb^\star) -  \sigma \bm{\lambda}^\star \oplus \bm{\mu}^\star - \sigma \mathbf{\Omega}^\star= \bm{0} \:.
\end{align}
Again using the same notations as before, the first order KKT condition for problem \eqref{eq:cot} reads
\begin{align}
    \C + \diag(\bm{\varepsilon}^\star) \nabla \psi(\Tb^\star) -  \bm{\rho}^\star \oplus \bm{\kappa}^{\star} - \mathbf{\Lambda}^\star = \bm{0} \:.
\end{align}
We end the proof by following the same reasoning as for \cref{prop:cot}, choosing for any $i$, $\nu_i = \varepsilon_i^\star / \sigma - 1 \geq 0$.

\textbf{\underline{Part II : Dual Problem of \eqref{eq:pcOT}.}}

The optimization problem \eqref{eq:pcOT} writes
\begin{align}
    \min_{\Pb \in \Pi(\bm{a}, \bm{b})} \: \langle \Pb, \C \rangle \quad \text{s.t.} \quad  \Pb \in \mathcal{B}_\psi(\xi) \:.
\end{align}
Introducing the dual variables $\bm{\lambda} \in \mathbb{R}^n$ and $\bm{\mu} \in \mathbb{R}^n$ for the marginals and $\bm{\varepsilon} \in \mathbb{R}_+^{n}$ for the constraint $\Pb \in \mathcal{B}_\psi(\xi)$. The problem can be formulated as
\begin{align}
    \min_{\Pb \bm{\geq} \bm{0}} \: \max_{\bm{\lambda}, \bm{\mu}, \bm{\varepsilon} \bm{\geq} \bm{0}} \: \langle \Pb, \C \rangle + \langle \bm{\lambda}, \bm{a} - \Pb \bm{1} \rangle + \langle \bm{\mu}, \bm{b} - \Pb^\top \bm{1} \rangle + \langle \bm{\varepsilon}, \psi(\Pb) - \psi(\mathbf{e}_\xi) \bm{1} \rangle \:.
\end{align} 
When $\bm{\varepsilon}^\star \bm{>} \bm{0}$, relying on strong duality to invert the min and max operators, the problem reduces to
\begin{align}
    &\max_{\bm{\lambda}, \bm{\mu}, \bm{\varepsilon} \bm{>} \bm{0}} \: \min_{\Pb \bm{\geq} \bm{0}} \: \langle \Pb, \C \rangle + \langle \bm{\lambda}, \bm{a} - \Pb \bm{1} \rangle + \langle \bm{\mu}, \bm{b} - \Pb^\top \bm{1} \rangle + \langle \bm{\varepsilon}, \psi(\Pb) - \psi(\mathbf{e}_\xi) \bm{1} \rangle \\
    = &\max_{\bm{\lambda}, \bm{\mu}, \bm{\varepsilon} \bm{>} \bm{0}} \: \langle \bm{\lambda}, \bm{a} \rangle + \langle \bm{\mu}, \bm{b} \rangle - \langle \bm{\varepsilon}, \psi(\mathbf{e}_\xi) \bm{1} \rangle + \min_{\Pb \bm{\geq} \bm{0}} \: \langle \Pb, \C - \bm{\lambda}\bm{1}^\top - \bm{1}\bm{\mu}^\top \rangle + \langle \bm{\varepsilon}, \psi(\Pb) \rangle \\
    = &\max_{\bm{\lambda}, \bm{\mu}, \bm{\varepsilon} \bm{>} \bm{0}} \: \langle \bm{\lambda}, \bm{a} \rangle + \langle \bm{\mu}, \bm{b} \rangle - \langle \bm{\varepsilon}, \psi(\mathbf{e}_\xi) \bm{1} \rangle + \min_{\Pb \bm{\geq} \bm{0}} \: \sum_i \langle \Pb_{i:}, \C_{i:} - \lambda_i \bm{1} - \bm{\mu} \rangle + \varepsilon_i \psi(\Pb_{i:}) \\
    = &\max_{\bm{\lambda}, \bm{\mu}, \bm{\varepsilon} \bm{>} \bm{0}}  \: \langle \bm{\lambda}, \bm{a} \rangle + \langle \bm{\mu}, \bm{b} \rangle - \langle \bm{\varepsilon}, \psi(\mathbf{e}_\xi) \bm{1} \rangle + \sum_i \varepsilon_i \psi^*((\C_{i:} - \lambda_i \bm{1} - \bm{\mu}) / \varepsilon_i) \\
    \stackrel{(\star)}{=} &\max_{\bm{\lambda}, \bm{\mu}, \bm{\varepsilon} \bm{>} \bm{0}} \: \langle \bm{\lambda}, \bm{a} \rangle + \langle \bm{\mu}, \bm{b} \rangle + \left\langle \bm{\varepsilon}, \psi^*\left((\C - \bm{\lambda} \oplus \bm{\mu}) \oslash \bm{\varepsilon} \bm{1}^\top \right) - \psi(\mathbf{e}_\xi) \bm{1} \right\rangle \:.
    \label{eq:dual-otari-s}
    \tag{Dual-OTARI-s}
\end{align}
In ($\star$) we used that $\psi^*(\mathbf{X}) = \left(\psi^*(\mathbf{X}_{1:}), ..., \psi^*(\mathbf{X}_{N:})\right)^\top$.
From Danskin's theorem \cite{danskin1966theory}, one can recover the solution of the primal
\begin{align}
    \Pb^\star = \nabla \psi^* \left(\diag(\bm{\varepsilon}^\star)^{-1} (\C - \bm{\lambda}^\star \oplus \bm{\mu}^\star) \right)
\end{align}
where $(\bm{\lambda}^\star, \bm{\mu}^\star, \bm{\varepsilon}^\star)$ solve \eqref{eq:dual-otari-s}.
\end{proof}

\section{$\psi$-Bregman Projections for $\psi_{\KL}$ and $\psi_{2}$}\label{sec:proof_projs}

    In this section, we detail the expressions of the projections used in the alternating Bregman projection approach.

    \begin{figure*}[t]
        \begin{center}
        \centerline{\includegraphics[width=0.6\columnwidth]{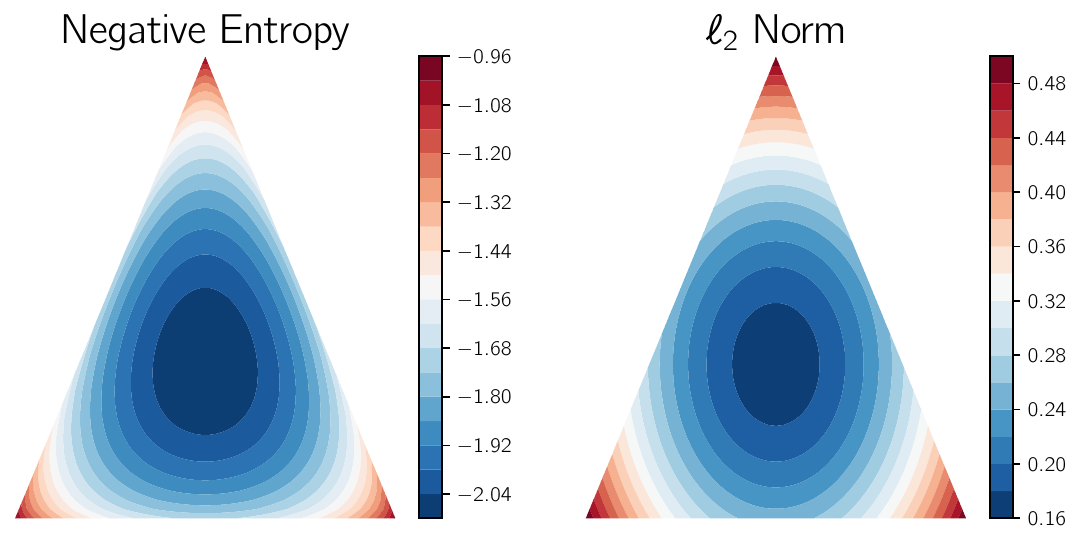}}
        \caption{$\sum_i \psi(p_i)$ plotted over the 3 dimensional probability simplex for $\psi_{\KL}$ (negative Shannon entropy) and $\psi_2 : \bm{x} \to \frac{1}{2} \| \bm{x} \|^2_2$. Unlike $\psi_{\KL}$, the level sets of $\psi_2$ intercept with the boundaries of the simplex thus leading to potentially sparse solutions when used to regularize OT.}
        \label{fig:Ps_vs_Pse}
        \end{center}
    \end{figure*}
    
    \subsection{KL Projections}
    
    \begin{prop}\label{prop:projections}
        When $D_\psi$ is the $\KL$ divergence $D_{\KL}$, one has for a matrix $\K \in \R_+^{N_S \times N_T}$,
          \begin{align}
            \operatorname{Proj}^{\KL}_{\Pi(\bm{a}) \cap \mathcal{B}_{\KL}(\xi)}(\mathbf{K}) = \diag(\mathbf{\Lambda} \mathbf{1})^{-1} \mathbf{\Lambda} \quad \text{with} \quad \mathbf{\Lambda} = \exp{(\diag(\bm{1} + \bm{\gamma}^\star)^{-1} \log \mathbf{K})}
        \end{align}
        where $\bm{\gamma}^\star \bm{\geq} \bm{0}$ is the optimal dual variable associated with the constraint $\mathcal{B}_{\KL}(\xi)$.
    \end{prop}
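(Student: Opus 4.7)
The plan is to derive the closed form by direct Lagrangian analysis, exploiting that $D_{\KL}$ decouples across rows once the row-wise dual variables are fixed. Since by assumption (inherited from the context of \cref{prop:pcot}) $\Pi(\bm{a}) \cap \mathcal{B}_{\KL}(\xi)$ has a relative interior point, Slater's condition holds for this convex program, so the KKT conditions are necessary and sufficient for optimality and strong duality applies.

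First I would introduce dual variables $\bm{\lambda} \in \R^{N_S}$ for the equality constraints $\Pb \bm{1} = \bm{a}$ and $\bm{\gamma} \in \R_+^{N_S}$ for the pointwise entropy constraints $\psi_{\KL}(\Pb_{i:}) \leq \psi_{\KL}(\mathbf{e}_\xi)$. No multiplier is required for $\Pb \bm{\geq} \bm{0}$: since $D_{\KL}(\Pb | \K)$ contains $P_{ij} \log P_{ij}$, whose derivative diverges at $0$, the optimum is strictly positive on the support of $\K$, and equals $0$ off of it by the standard $\KL$ conventions. Writing the Lagrangian and enforcing $\nabla_{P_{ij}} \mathcal{L} = 0$ yields the separable stationarity condition
\begin{equation*}
(1 + \gamma_i^\star) \log P_{ij}^\star \;=\; \lambda_i^\star + \log K_{ij}\,,
\end{equation*}
so that $\log \Pb^\star = \diag(\bm{1} + \bm{\gamma}^\star)^{-1}(\bm{\lambda}^\star \bm{1}^\top + \log \K)$. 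Defining $\mathbf{\Lambda} := \exp\bigl(\diag(\bm{1} + \bm{\gamma}^\star)^{-1} \log \K\bigr)$ exactly as in the statement, this reads $\Pb^\star = \diag(\bm{u}^\star)\, \mathbf{\Lambda}$ with positive row scaling $u_i^\star := \exp(\lambda_i^\star/(1+\gamma_i^\star))$.

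Second, I would pin down $\bm{u}^\star$ by enforcing primal feasibility of the row marginal constraint $\sum_j u_i^\star \Lambda_{ij} = a_i$, which fixes $u_i^\star$ as the reciprocal of $(\mathbf{\Lambda}\bm{1})_i$ (up to the marginal mass), yielding the announced diagonal rescaling by $\diag(\mathbf{\Lambda} \bm{1})^{-1}$. The remaining dual variable $\bm{\gamma}^\star \bm{\geq} \bm{0}$ is then characterized by complementary slackness on each row: either $\gamma_i^\star = 0$ (the entropy of the unconstrained row projection already lies below the threshold) or $\psi_{\KL}(\Pb_{i:}^\star) = \psi_{\KL}(\mathbf{e}_\xi)$, which gives a scalar monotone equation in $\gamma_i^\star$ that can be solved independently across rows.

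The main obstacle I anticipate is essentially bookkeeping: carefully justifying that the nonnegativity multiplier vanishes on the support of $\K$ so that the stationarity condition reduces to the clean log-linear form above, and checking that all exponentials/logarithms are well-defined under the conventions of \cref{sec:notations}. Once that is handled, the row-wise decoupling and the explicit form of stationarity make the remainder of the argument routine.
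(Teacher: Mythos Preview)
Your proposal is correct and follows essentially the same route as the paper: write the Lagrangian with row-wise dual variables $(\bm{\lambda},\bm{\gamma})$, omit the nonnegativity multiplier thanks to the log-barrier structure of $D_{\KL}$, solve the resulting stationarity condition $(1+\gamma_i^\star)\log P_{ij}^\star=\lambda_i^\star+\log K_{ij}$, and then fix the row scaling via the marginal constraint. The only additional step the paper carries out explicitly is to plug $\Pb^\star(\bm{\gamma})$ back into the Lagrangian, compute the gradient of the resulting concave dual, and identify $\bm{\gamma}^\star=[\overline{\bm{\gamma}}]_+$ with $\overline{\bm{\gamma}}$ the unique root---exactly the ``scalar monotone equation per row'' you describe via complementary slackness.
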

    
    \begin{proof}   
    The $\KL$ projection of a matrix $\K \in \R_+^{N_S \times N_T}$ onto $\Pi(\bm{a}) \cap \mathcal{B}_1(\xi)$ is the following problem.
    \begin{align}
        \min_{\Pb \in \R_+^{N_S \times N_T}} \quad &\KL(\Pb | \K) = \langle \Pb, \log (\Pb \oslash \K) - \bm{1}\bm{1}^\top \rangle \\
        \text{s.t.} \quad &\forall i \in \integ{N_S}, \: \operatorname{H}(\Pb_{i:}) \geq \eta \\
        & \Pb \bm{1} = \bm{a} \:.
    \end{align}
    where for $\bm{p} \in \R^{N_S}_+$, $\operatorname{H}(\bm{p}) = - \langle \bm{p}, \log \bm{p} - \bm{1} \rangle$ is the Shannon entropy.
    The associated Lagrangian writes
    \begin{align}
        \mathcal{L}(\Pb, \bm{\lambda}, \bm{\gamma}) &= \langle \Pb, \log \Pb - \log \K - \bm{1}\bm{1}^\top \rangle + \langle \bm{\gamma}, \eta \bm{1} - \operatorname{H}(\Pb) \rangle + \langle \bm{\lambda}, \bm{a} - \Pb \bm{1} \rangle \:.
    \end{align}
    
    Strong duality holds hence any optimal primal-dual variables $(\Pb^\star, \bm{\gamma}^\star, \bm{\lambda}^\star)$ must satisfy the KKT conditions. The first-order optimality condition gives
    \begin{align}
        \nabla_{\Pb} \mathcal{L} (\Pb^\star, \bm{\gamma}^\star, \bm{\lambda}^\star) &= \log \left( \Pb^\star \oslash \K \right) + \operatorname{diag}(\bm{\gamma}^\star)\log{\Pb^\star} - \bm{\lambda}^\star \bm{1}^\top = \bm{0} \:.
    \end{align}
    Isolating $\Pb^\star$ yields
    \begin{align}
        \forall (i,j) \in \integ{N_S} \times \integ{N_T}, \quad P^\star_{ij} = \frac{1}{u_i} \exp{((\log K_{ij})/(1 + \gamma^\star_i))}
    \end{align}
    where $u_i = \exp{(-\lambda_i/(1 + \gamma^\star_i))}$. Given the marginal constraint, we have 
    \begin{equation}
        u_i = a_i^{-1} \sum_{j \in \integ{N_T}} \exp{((\log K_{ij})/(1 + \gamma^\star_i))} \:.
    \end{equation}
    We are now left with $\Pb^\star$ as a function of $\bm{\gamma}$. Plugging $\Pb^\star$ in $\mathcal{L}$ yields the dual function $\bm{\gamma} \mapsto \mathcal{G}(\bm{\gamma})$. This function is concave (property of the dual problem) and its gradient reads:
    \begin{align}
        \nabla_{\bm{\gamma}} \mathcal{G}(\bm{\gamma}) = (\log \xi + 1 )\bm{1} - \operatorname{H}(\Pb^\star(\bm{\gamma})) \:.
    \end{align}
    Similarly to \cite{van2023snekhorn}, one can show that the above gradient is canceled for a unique $\overline{\bm{\gamma}}$. The optimal dual variable is then given by $\bm{\gamma}^\star = [\overline{\bm{\gamma}}]_+$.
            
        \end{proof}
    
    \subsection{Euclidean Projections}
    
    For the Euclidean case, we break down the projection into $\ell_2$ norm and marginal projections. Starting with the marginal projection, we have the following expression \cite{lorenz2021quadratically}    
\begin{equation}
    \operatorname{Proj}^{\ell_2}_{\Pi(\bm{a})}(\mathbf{K}) = [\bm{\lambda}^\star \bm{1}^\top + \mathbf{K}]_+
\end{equation}
    where $\bm{\lambda}^\star$ is such that $[\bm{\lambda}^\star \bm{1}^\top + \mathbf{K}]_+ \in \Pi(\bm{a})$.

    Focusing on the $\ell_2$ norm we have the following result.
    \begin{prop}
    One has
    \begin{equation}
        \operatorname{Proj}^{\ell_2}_{\mathcal{B}_2(\xi)}(\mathbf{K}) = \operatorname{diag}(\bm{\gamma}^\star)^{-1}\mathbf{K}
    \end{equation}
    where for any $i$, $\gamma^\star_i = \max \left(\xi^{1 / 2} \|\K_{i:}\|_2, 1 \right)$.
    \end{prop}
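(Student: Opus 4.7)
The key observation is that both the objective $\frac{1}{2}\|\Pb - \K\|_F^2$ defining $\operatorname{Proj}^{\ell_2}_{\mathcal{B}_2(\xi)}(\K)$ and the constraint $\Pb \in \mathcal{B}_2(\xi)$ decouple row-wise, since $\mathcal{B}_2(\xi)$ only involves per-row constraints $\|\Pb_{i:}\|_2^2 \leq 1/\xi$ (obtained from $\psi_2(\Pb_{i:}) \leq \psi_2(\mathbf{e}_\xi)$). Hence the projection reduces to $N_S$ independent subproblems of the form
\begin{equation*}
    \min_{\bm{p} \in \R^{N_T}_+} \tfrac{1}{2}\|\bm{p} - \K_{i:}\|_2^2 \quad \text{s.t.} \quad \|\bm{p}\|_2^2 \leq 1/\xi,
\end{equation*}
each of which is a Euclidean projection onto a ball of radius $1/\sqrt{\xi}$ (intersected with the nonnegative orthant, a constraint that is automatically satisfied since $\K_{i:} \bm{\geq} \bm{0}$ and the solution will be a nonnegative scalar multiple of $\K_{i:}$).

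I would then solve each subproblem by KKT. Introducing a multiplier $\nu_i \geq 0$ for $\|\bm{p}\|_2^2 \leq 1/\xi$, the stationarity condition $\bm{p} - \K_{i:} + \nu_i \bm{p} = \bm{0}$ gives $\bm{p} = \K_{i:}/\gamma_i$ with $\gamma_i \coloneqq 1 + \nu_i \geq 1$. Complementary slackness forces either $\gamma_i = 1$ or $\|\K_{i:}\|_2^2/\gamma_i^2 = 1/\xi$, and primal feasibility requires $\gamma_i \geq \sqrt{\xi}\,\|\K_{i:}\|_2$. A case split then pins down $\gamma_i^\star$: if $\sqrt{\xi}\|\K_{i:}\|_2 \leq 1$, the unconstrained optimum $\bm{p} = \K_{i:}$ already lies in the ball, so $\nu_i^\star = 0$ and $\gamma_i^\star = 1$; otherwise the constraint is active, so $\nu_i^\star > 0$ and $\gamma_i^\star = \sqrt{\xi}\,\|\K_{i:}\|_2$. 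These two cases combine into $\gamma_i^\star = \max(\sqrt{\xi}\,\|\K_{i:}\|_2, 1)$, yielding $\Pb^\star_{i:} = \K_{i:}/\gamma_i^\star$; stacking the rows gives the claimed matrix form $\operatorname{diag}(\bm{\gamma}^\star)^{-1}\K$.

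\textbf{Main obstacle.} There is no real obstacle beyond being careful about the case split: the subproblem is strictly convex (the objective is quadratic with positive definite Hessian) so KKT conditions are necessary and sufficient, and the ball constraint qualifies Slater trivially (e.g.\ $\bm{p} = \bm{0}$ is strictly feasible). The only point deserving a brief justification is that we may drop the $\Pb \bm{\geq} \bm{0}$ constraint in the statement without changing the optimum, which follows from the fact that the candidate $\K_{i:}/\gamma_i^\star$ is a nonnegative scalar multiple of a nonnegative vector.
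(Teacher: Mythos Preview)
Your proposal is correct and follows essentially the same route as the paper: both reduce the projection to KKT conditions for the row-wise ball-constrained quadratic, obtain $\Pb^\star_{i:} = \K_{i:}/(1+\nu_i^\star)$, and then determine the multiplier via feasibility and complementary slackness. Your treatment is in fact slightly more careful than the paper's, which does not explicitly address the nonnegativity constraint or spell out the case split, whereas you justify both.
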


    \begin{proof}
    
    The $D_2$ projection of a matrix $\K \in \R_+^{N_S \times N_T}$ onto $\mathcal{B}_2(\xi)$ reduces to
    \begin{align}
        \min_{\Pb \in \R_+^{N_S \times N_T}} \quad &D_2(\Pb | \K_\varepsilon) = \frac{1}{2}\left\langle \mathbf{P}^{\odot 2}, \bm{1} \right\rangle - \left\langle \mathbf{P}, \mathbf{K} \right\rangle \\
        \text{s.t.} \quad
        &\forall i \in \integ{N_S}, \: \|\Pb_{i:} \|_2^2 \leq (1/\xi) \:.
    \end{align}
    Introducing the dual variable $\bm{\omega} \in \mathbb{R}_+^n$, the Lagrangian writes:
    \begin{align}
        \mathcal{L}(\Pb, \bm{\omega}, \bm{\Omega}) &=  \frac{1}{2}\left\langle \mathbf{P}^{\odot 2}, \bm{1} \right\rangle - \left\langle \mathbf{P}, \mathbf{K} \right\rangle + \frac{1}{2} \sum_i \omega_i \left(\|\Pb_{i:}\|_{2}^{2} - (1/\xi) \right) \:.
    \end{align}
    $\Pb^\star$ solves the primal problem if and only if there exists $\bm{\omega}^\star$ that satisfies the KKT conditions.
    The first-order condition yields
    \begin{align}
        \nabla_{\Pb} \mathcal{L} (\Pb^\star, \bm{\omega}^\star, \bm{\Omega}^\star) &= - \mathbf{K} + \operatorname{diag}(\bm{\omega}^\star + \bm{1})\Pb^\star = \bm{0} \:.
    \end{align}
    Hence it follows
    \begin{align}
        \Pb^\star = \operatorname{diag}(\bm{\omega}^\star + \bm{1})^{-1}\mathbf{K} \:.
    \end{align}

    To satisfy the KKT conditions, one has to find the root $\bm{\omega}^\star$ of the following independent problems 
    \begin{align}
        \forall i, \quad \left(\omega_i + 1 \right)^{2} =  \xi \|\K_{i:}\|_2^2 \:.
    \end{align}
    Thus we have $\omega_i + 1 = \xi^{1 / 2} \|\K_{i:}\|_2 $ and taking $\omega_i \geq 0$ into account yields the result.
    \end{proof}


\begin{thebibliography}{10}

\bibitem{arjovsky2017wasserstein}
Martin Arjovsky, Soumith Chintala, and L{\'e}on Bottou.
\newblock Wasserstein generative adversarial networks.
\newblock In {\em International conference on machine learning}, pages
  214--223. PMLR, 2017.

\bibitem{bauschke2000dykstras}
Heinz~H Bauschke and Adrian~S Lewis.
\newblock Dykstras algorithm with bregman projections: A convergence proof.
\newblock {\em Optimization}, 48(4):409--427, 2000.

\bibitem{benamou2015iterative}
Jean-David Benamou, Guillaume Carlier, Marco Cuturi, Luca Nenna, and Gabriel
  Peyr{\'e}.
\newblock Iterative bregman projections for regularized transportation
  problems.
\newblock {\em SIAM Journal on Scientific Computing}, 37(2):A1111--A1138, 2015.

\bibitem{blondel2018smooth}
Mathieu Blondel, Vivien Seguy, and Antoine Rolet.
\newblock Smooth and sparse optimal transport.
\newblock In {\em International conference on artificial intelligence and
  statistics}, pages 880--889. PMLR, 2018.

\bibitem{boyd2004convex}
Stephen Boyd, Stephen~P Boyd, and Lieven Vandenberghe.
\newblock {\em Convex optimization}.
\newblock Cambridge university press, 2004.

\bibitem{censor1998dykstra}
Yair Censor and Simeon Reich.
\newblock The dykstra algorithm with bregman projections.
\newblock {\em Communications in Applied Analysis}, 2(3):407--420, 1998.

\bibitem{pmlr-v202-chuang23a}
Ching-Yao Chuang, Stefanie Jegelka, and David Alvarez-Melis.
\newblock {I}nfo{OT}: Information maximizing optimal transport.
\newblock In Andreas Krause, Emma Brunskill, Kyunghyun Cho, Barbara Engelhardt,
  Sivan Sabato, and Jonathan Scarlett, editors, {\em Proceedings of the 40th
  International Conference on Machine Learning}, volume 202 of {\em Proceedings
  of Machine Learning Research}, pages 6228--6242. PMLR, 23--29 Jul 2023.

\bibitem{courty2017joint}
Nicolas Courty, R{\'e}mi Flamary, Amaury Habrard, and Alain Rakotomamonjy.
\newblock Joint distribution optimal transportation for domain adaptation.
\newblock {\em Advances in neural information processing systems}, 30, 2017.

\bibitem{cuturi2013sinkhorn}
Marco Cuturi.
\newblock Sinkhorn distances: Lightspeed computation of optimal transport.
\newblock {\em Neural Information Processing Systems (NeurIPS)}, 26, 2013.

\bibitem{danskin1966theory}
John~M Danskin.
\newblock The theory of max-min, with applications.
\newblock {\em SIAM Journal on Applied Mathematics}, 14(4):641--664, 1966.

\bibitem{dykstra1983algorithm}
Richard~L Dykstra.
\newblock An algorithm for restricted least squares regression.
\newblock {\em Journal of the American Statistical Association},
  78(384):837--842, 1983.

\bibitem{flamary2016optimal}
R{\'e}mi Flamary, C{\'e}dric F{\'e}votte, Nicolas Courty, and Valentin Emiya.
\newblock Optimal spectral transportation with application to music
  transcription.
\newblock {\em Neural Information Processing Systems (NeurIPS)}, 29, 2016.

\bibitem{kantorovich1942transfer}
L~Kantorovich.
\newblock On the transfer of masses (in russian).
\newblock In {\em Doklady Akademii Nauk}, volume~37, page 227, 1942.

\bibitem{kingma2014adam}
Diederik~P Kingma and Jimmy Ba.
\newblock Adam: A method for stochastic optimization.
\newblock {\em arXiv preprint arXiv:1412.6980}, 2014.

\bibitem{landa2021doubly}
Boris Landa, Ronald~R Coifman, and Yuval Kluger.
\newblock Doubly stochastic normalization of the gaussian kernel is robust to
  heteroskedastic noise.
\newblock {\em SIAM journal on mathematics of data science}, 3(1):388--413,
  2021.

\bibitem{leonard2013survey}
Christian L{\'e}onard.
\newblock A survey of the schrödinger problem and some of its connections with
  optimal transport.
\newblock {\em arXiv preprint arXiv:1308.0215}, 2013.

\bibitem{liu1989limited}
Dong~C Liu and Jorge Nocedal.
\newblock On the limited memory bfgs method for large scale optimization.
\newblock {\em Mathematical programming}, 45(1-3):503--528, 1989.

\bibitem{lorenz2021quadratically}
Dirk~A Lorenz, Paul Manns, and Christian Meyer.
\newblock Quadratically regularized optimal transport.
\newblock {\em Applied Mathematics \& Optimization}, 83(3):1919--1949, 2021.

\bibitem{mukherjee2021outlier}
Debarghya Mukherjee, Aritra Guha, Justin~M Solomon, Yuekai Sun, and Mikhail
  Yurochkin.
\newblock Outlier-robust optimal transport.
\newblock In {\em International Conference on Machine Learning}, pages
  7850--7860. PMLR, 2021.

\bibitem{ozair2019wasserstein}
Sherjil Ozair, Corey Lynch, Yoshua Bengio, Aaron Van~den Oord, Sergey Levine,
  and Pierre Sermanet.
\newblock Wasserstein dependency measure for representation learning.
\newblock {\em Advances in Neural Information Processing Systems}, 32, 2019.

\bibitem{peyre2019computational}
Gabriel Peyr{\'e}, Marco Cuturi, et~al.
\newblock Computational optimal transport: With applications to data science.
\newblock {\em Foundations and Trends{\textregistered} in Machine Learning},
  11(5-6):355--607, 2019.

\bibitem{pooladian2021entropic}
Aram-Alexandre Pooladian and Jonathan Niles-Weed.
\newblock Entropic estimation of optimal transport maps.
\newblock {\em arXiv preprint arXiv:2109.12004}, 2021.

\bibitem{rockafellar1997convex}
R~Tyrrell Rockafellar.
\newblock {\em Convex analysis}, volume~11.
\newblock Princeton university press, 1997.

\bibitem{schrodinger1931umkehrung}
Erwin Schr{\"o}dinger.
\newblock {\em {\"U}ber die umkehrung der naturgesetze}.
\newblock Verlag der Akademie der Wissenschaften in Kommission bei Walter De
  Gruyter u~…, 1931.

\bibitem{van2023snekhorn}
Hugues Van~Assel, Titouan Vayer, R{\'e}mi Flamary, and Nicolas Courty.
\newblock Snekhorn: Dimension reduction with symmetric entropic affinities.
\newblock {\em arXiv preprint arXiv:2305.13797}, 2023.

\bibitem{villani2009optimal}
C{\'e}dric Villani et~al.
\newblock {\em Optimal transport: old and new}, volume 338.
\newblock Springer, 2009.

\bibitem{Zass}
Ron Zass and Amnon Shashua.
\newblock Doubly stochastic normalization for spectral clustering.
\newblock In B.~Sch\"{o}lkopf, J.~Platt, and T.~Hoffman, editors, {\em Neural
  Information Processing Systems (NeurIPS)}. MIT Press, 2006.

\end{thebibliography}
\end{document}